\newtheorem{theorem}{Theorem}
\newtheorem{definition}{Definition}
\DeclareMathOperator*{\argmin}{argmin}
\renewcommand{\algorithmiccomment}[1]{\bgroup\hfill\scriptsize#1\egroup}
\icmltitlerunning{Pseudo-task Augmentation}
\begin{document}

\twocolumn[
%\icmltitle{Redundant Outputs: A Missing Link \\for Deep Multi-model and Multitask Methods}

\icmltitle{Pseudo-task Augmentation: From Deep Multitask\\ Learning to Intratask Sharing---and Back}

% It is OKAY to include author information, even for blind
% submissions: the style file will automatically remove it for you
% unless you've provided the [accepted] option to the icml2018
% package.

% List of affiliations: The first argument should be a (short)
% identifier you will use later to specify author affiliations
% Academic affiliations should list Department, University, City, Region, Country
% Industry affiliations should list Company, City, Region, Country

% You can specify symbols, otherwise they are numbered in order.
% Ideally, you should not use this facility. Affiliations will be numbered
% in order of appearance and this is the preferred way.
%\icmlsetsymbol{equal}{*}

\begin{icmlauthorlist}
\icmlauthor{Elliot Meyerson}{texas,sentient}
\icmlauthor{Risto Miikkulainen}{texas,sentient}
\end{icmlauthorlist}

\icmlaffiliation{texas}{The University of Texas at Austin}
\icmlaffiliation{sentient}{Sentient Technologies, Inc.}

\icmlcorrespondingauthor{Elliot Meyerson}{ekm@cs.utexas.edu}

% You may provide any keywords that you
% find helpful for describing your paper; these are used to populate
% the "keywords" metadata in the PDF but will not be shown in the document
\icmlkeywords{Deep Learning, Multitask Learning, Pseudo-task Augmentation, Multiple Models, Model Search}

\vskip 0.3in
]

% this must go after the closing bracket ] following \twocolumn[ ...

% This command actually creates the footnote in the first column
% listing the affiliations and the copyright notice.
% The command takes one argument, which is text to display at the start of the footnote.
% The \icmlEqualContribution command is standard text for equal contribution.
% Remove it (just {}) if you do not need this facility.

\printAffiliationsAndNotice{}  % leave blank if no need to mention equal contribution
%\printAffiliationsAndNotice{\icmlEqualContribution} % otherwise use the standard text.

\begin{abstract}

Deep multitask learning boosts performance by sharing learned structure across related tasks.
This paper adapts ideas from deep multitask learning to the setting where only a single task is available.
The method is formalized as \emph{pseudo-task augmentation}, in which models are trained with multiple decoders for each task.
Pseudo-tasks simulate the effect of training towards closely-related tasks drawn from the same universe.
In a suite of experiments, pseudo-task augmentation improves performance on single-task learning problems.
When combined with multitask learning, further improvements are achieved, including state-of-the-art performance on the CelebA dataset, showing that pseudo-task augmentation and multitask learning have complementary value.
All in all, pseudo-task augmentation is a broadly applicable and efficient way to boost performance in deep learning systems.

\end{abstract}

\section{Introduction}
\label{sec:introduction}

Multitask learning (MTL) \cite{Caruana:1998} improves performance by leveraging relationships between distinct learning problems.
In recent years, MTL has been extended to deep learning, in which it has improved performance in applications such as vision \cite{Zhang:2014, Bilen:2016, Misra:2016, Rudd:2016, Lu:2016, Rebuffi:2017, Yang:2017}, natural language \cite{Collobert:2008, Dong:2015,  Liu:2015, Luong:2016, Hashimoto:2017}, speech \cite{Huang:2013, Seltzer:2013, Huang:2015, Wu:2015}, reinforcement learning \cite{Devin:2016, Jaderberg:2016, Teh:2017}, and even seemingly unrelated tasks from disparate domains \cite{Kaiser:2017, Meyerson:2018}.
Deep MTL relies on training signals from multiple datasets to train deep structure that is shared across tasks.
Since the shared structure must support solving multiple problems, it is inherently more general, which leads to better generalization to holdout data.

This paper adapts ideas from deep MTL to the single-task learning (STL) case, i.e., when only a single task is available for training.
The method is formalized as \emph{pseudo-task augmentation} (PTA), in which a single task has multiple distinct decoders projecting the output of the shared structure to task predictions.
By training the shared structure to \emph{solve the same problem in multiple ways}, PTA simulates the effect of training towards distinct but closely-related tasks drawn from the same universe.
Theoretical justification shows how training dynamics with multiple pseudo-tasks strictly subsumes training with just one, and a class of algorithms is introduced for controlling pseudo-tasks in practice.

In an array of experiments, PTA is shown to significantly improve performance in single-task settings.
Although different variants of PTA traverse the space of pseudo-tasks in qualitatively different ways, they all demonstrate substantial gains.
Experiments also show that when PTA is combined with MTL, further improvements are achieved, including state-of-the-art performance on the CelebA dataset.
In other words, although PTA can be seen as a base case of MTL, PTA and MTL have complementary value in learning more generalizable models.
The conclusion is that pseudo-task augmentation is an efficient, reliable, and broadly applicable method for boosting performance in deep learning systems. 

The remainder of the paper is organized as follows: Section~\ref{sec:training_multiple_deep_models} covers background on deep learning methods that train multiple models; Section~\ref{sec:pseudotask_augmentation} introduces the pseudo-task augmentation framework and practical implementations; Section~\ref{sec:experiments} describes experimental setups and results; Sections~\ref{sec:discussion} and \ref{sec:conclusion} discuss future work and overall implications.

\section{Training Multiple Deep Models}
\label{sec:training_multiple_deep_models}

There is a broad range of methods that exploit synergies across multiple deep models.
This section reviews these methods by classifying them into three types:
(1) methods that jointly train a model for multiple tasks;
(2) methods that train multiple models separately for a single task;
and (3) methods that jointly train multiple models for a single task.
This review motivates the development of methods in (3) that unify the advantages of (1) and (2).

\subsection{Joint training of models for multiple tasks}
\label{subsec:multitask_learning}

There are many real-world scenarios where harnessing data from multiple related tasks can improve overall performance.
In general, there are $T$ tasks $\{\{\bm{x}_{ti}, \bm{y}_{ti}\}_{i=1}^{N_t}\}_{t=1}^T$, where $N_t$ is the number of samples for the $t$th task.
Note that it is possible that for $t_1 \neq t_2$, $N_{t_1} \neq N_{t_2}$, $\dim(\bm{x}_{t_1}) \neq \dim(\bm{x}_{t_2})$, and/or $\dim(\bm{y}_{t_1}) \neq \dim(\bm{y}_{t_2})$.
The only requirement for multitask learning to be useful is that there is some amount of information shared across tasks, and, in theory, this is always the case \cite{Mahmud:2008, Mahmud:2009}.

Joint training of neural network models for multiple tasks was proposed decades ago \cite{Caruana:1998}.
Modern approaches have extended this early work to deep learning.
Though more sophisticated methods now exist, the most common approach is still based on the original work, in which a joint model is decomposed into an underlying model $\mathcal{F}$ (parameterized by $\theta_{\mathcal{F}}$) that is shared across all tasks, and task-specific decoders $\mathcal{D}_t$ (parameterized by $\theta_{\mathcal{D}_t}$) for each task.
The model for the $t$th task is then defined as
\begin{equation}
\label{eq:multitask_model}
\hat{\bm{y}}_{ti} = \mathcal{D}_t(\mathcal{F}(\bm{x}_{ti}; \theta_{\mathcal{F}}); \theta_{\mathcal{D}_t}) \, .
\end{equation}
Given a fixed model architecture for all $\mathcal{D}_t$ and $\mathcal{F}$, the joint model is completely defined by the parameters $\theta = (\{\theta_{\mathcal{D}_t}\}_{t=1}^T, \theta_{\mathcal{F}})$.
To maximize overall performance, the goal is to find optimal parameters $\theta^*$ such that
\begin{equation}
\label{eq:multitask_loss}
\theta^* = \argmin_{\theta}\frac{1}{T}\sum_{t=1}^{T}\frac{1}{N_t}\sum_{i=1}^{N_t}\mathcal{L}(\bm{y}_{ti}, \hat{\bm{y}}_{ti})
\end{equation}
for a suitable sample-wise loss function $\mathcal{L}$, e.g., mean squared error or cross-entropy loss.
More sophisticated deep MTL approaches can be characterized by the design decision of \emph{how} learned structure is shared across tasks.
For example, some methods supervise different tasks at different depths of the shared structure \cite{Zhang:2016, Hashimoto:2017, Toshniwal:2017}; other methods duplicate the shared structure into columns and define mechanisms for sharing information across columns \cite{Jou:2016,Misra:2016,Long:2017,Yang:2017}.
More detailed characterizations of deep MTL methods can be found in previous work \cite{ Ruder:2017, Meyerson:2018}.

MTL has also been explored extensively outside of deep learning. 
Many such techniques take a similar approach of having shared structure with a separate linear decoder for each task, while enforcing regularization of shared convex structure \cite{Evgeniou:2004,Argyriou:2008,Kang:2011,Kumar:2012}.
Overall, by requiring models to fit multiple real world datasets simultaneously, MTL is a promising approach to learning more realistic, and thus more generalizable, models.

\subsection{Separate training of multiple models for STL}
\label{subsec:single_task_separate_models}

How to construct and train deep a neural network effectively is an open-ended design problem even in the case of a single task.
A range of methods have been developed that aim at overcoming this problem by training multiple models separately for a single task.
One class of methods searches for optimal fixed designs, e.g., by automatically optimizing learning hyperparameters \cite{Bergstra:2011,Snoek:2012} or more open-ended network topologies \cite{Miikkulainen:2017, Real:2017, Zoph:2017}.
The multiple models synergize by providing complementary information about different areas of the search space, and, over time, the results of past models can be used to generate better models.
Population-based training takes this one step further, by copying the weights of successful models to new models \cite{Jaderberg:2017}.
This weight-copying is similar to methods that transfer learned behavior across a sequence of pre-defined architectures \cite{Hinton:2015, Chen:2016, Wei:2016}.
The synergy of multiple models can also be exploited via ensembling \cite{Dietterich:2000}.
Overall, the widespread success of the above methods have shown the value of training multiple models separately, both sequentially and in parallel.

\subsection{Joint training of multiple models for STL}
\label{subsec:single_task_joint_model}

Some existing methods can be viewed as jointly training multiple models for a single task.
For instance, to improve training of deep models, deep supervision includes loss layers at multiple depths \cite{Lee:2015}.
As a by-product, this approach yields a distinct model for the task at each such depth, though only the deepest model is ever evaluated.
As another example, dropout \cite{Srivastava:2014}, and pseudo-ensembles more generally \cite{Bachman:2014}, can be seen as implicitly training many relatively weak models that are combined during evaluation.
Also, PathNet \cite{Fernando:2017} jointly trains multiple networks induced by various paths through a set of shared modules.
However, the goal is not to improve single task performance, but discover structure that can be effectively reused by future tasks.
Although these existing methods jointly train multiple models for a single task, they do not perform joint training in the MTL sense.
Ideally, the benefits of the methods in Sections~\ref{subsec:multitask_learning} and \ref{subsec:single_task_separate_models} could be combined, yielding methods that train multiple models that share underlying parameters \emph{and} sample complementary high-performing areas of the model space.
This paper takes first steps in that direction, showing that such methods are indeed promising.
The specific approach, PTA, is introduced in the next section.

\section{Pseudo-task Augmentation (PTA)}
\label{sec:pseudotask_augmentation}

This section introduces the PTA method.
First, the classical deep MTL approach is extended to the case of multiple decoders per task.
Then, the concept of a pseudo-task is introduced, and increased training dynamics under multiple pseudo-tasks is demonstrated.
Finally, practical methods for controlling pseudo-tasks during training are described, which will be compared empirically in Section~\ref{sec:experiments}.

\subsection{A Classical Approach}
\label{subsec:classical}

The most common approach to deep MTL is still the ``classical'' approach (Eq.~\ref{eq:multitask_model}), in which all layers are shared across all tasks up to a high level, after which each task learns a distinct decoder that maps high-level points to its task-specific output space \cite{Caruana:1998, Ranjan:2016, Lu:2016}.
Even when more sophisticated methods are developed, the classical approach is often used as a baseline for comparison.
The classical approach is also computationally efficient, in that the only additional parameters beyond a single task model are in the additional decoders.
Thus, when applying ideas from deep MTL to single-task multi-model learning, the classical approach is a natural starting point.

Consider again the case where there are $T$ distinct \emph{true} tasks, but now let there be $D$ decoders for each task.
Then, the model for the $d$th decoder of the $t$th task is given by
\begin{equation}
\label{eq:pseudotask_model}
\hat{\bm{y}}_{tdi} = \mathcal{D}_{td}(\mathcal{F}(\bm{x}_{ti}; \theta_{\mathcal{F}}); \theta_{\mathcal{D}_{td}}) \, ,
\end{equation}
and the overall loss for the joint model from Eq.~\ref{eq:multitask_loss} becomes
\begin{equation}
\label{eq:pseudotask_loss}
\theta^* = \argmin_{\theta}\frac{1}{TD}\sum_{t=1}^{T}\frac{1}{N_t}\sum_{i=1}^{N_t}\sum_{d=1}^D\mathcal{L}(\bm{y}_{ti}, \hat{\bm{y}}_{tdi}) \, ,
\end{equation}
where $\theta = (\{\{\theta_{\mathcal{D}_{td}}\}_{d=1}^D\}_{t=1}^T, \theta_\mathcal{F})$.
In the same way as the classical approach to MTL encourages $\mathcal{F}$ to be more general and robust by requiring it to support multiple tasks, here $\mathcal{F}$ is required to support \emph{solving the same task in multiple ways}.
A visualization of a resulting joint model is shown in Figure~\ref{fig:pseudotask}.
\begin{figure}
\centering
\includegraphics[width=\linewidth]{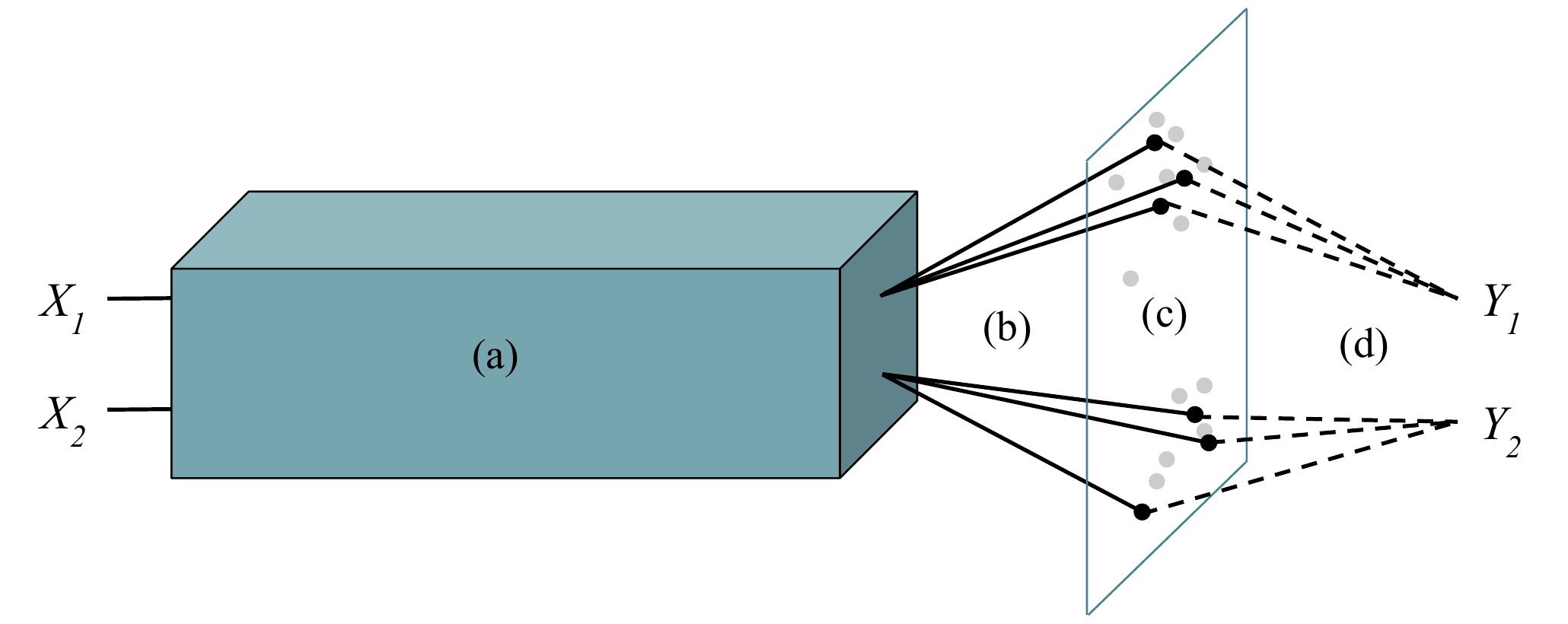}
\caption{\label{fig:pseudotask} General setup for pseudo-task augmentation with two tasks.
(a) \emph{Underlying model.} All task inputs are embedded through an underlying model that is completely shared;
(b) \emph{Multiple decoders.} Each task has multiple decoders (solid black lines) each projecting the embedding to a distinct classification layer;
(c) \emph{Parallel traversal of model space.} The underlying model coupled with a decoder defines a task model.
Task models populate a model space, with current models shown as black dots and previous models shown as gray dots;
(d) \emph{Multiple loss signals.} Each current task model receives a distinct loss to compute its distinct gradient.
A task coupled with a decoder and its parameters defines a pseudo-task for the underlying model.
}
\end{figure}
A theme in MTL is that models for related tasks will have similar decoders, as implemented by explicit regularization \cite{Evgeniou:2004, Kumar:2012, Long:2017, Yang:2017}.
Similarly, in Eq.~\ref{eq:pseudotask_loss}, through training, two decoders for the same task will instantiate similar models, and, as long as they do not converge completely to equality, they will simulate the effect of training with multiple closely-related tasks.

Notice that the innermost summation in Eq.~\ref{eq:pseudotask_loss} is over decoders.
This calculation is computationally efficient: because each decoder for a given task takes the same input, $\mathcal{F}(\bm{x}_{ti})$ (usually the most expensive part of the model) need only be computed once per sample (and only once over all tasks if all tasks share $\bm{x}_{ti}$).
However, when evaluating the performance of a model, since each decoder induces a distinct model for a task, what matters is not the average over decoders, but the best performing decoder for each task, i.e.,
\begin{equation}
\label{eq:pseudotask_evaluation}
\theta^*_{\text{eval}} = \argmin_{\theta}\frac{1}{T}\sum_{t=1}^{T}\frac{1}{N_t}\argmin_{d \in 1..D}\sum_{i=1}^{N_t}\mathcal{L}(\bm{y}_{ti}, \hat{\bm{y}}_{tdi}).
\end{equation}
Eq.~\ref{eq:pseudotask_loss} is used in training because it is smoother; Equation~\ref{eq:pseudotask_evaluation} is used for model validation, and to select the best performing decoder for each task from the final joint model. This decoder is then applied to future data, e.g., a holdout set.
Once the models are trained, in principle they form a set of distinct and equally powerful models for each task. 
It may therefore be tempting to ensemble them for evaluation, i.e.,
\begin{equation}
\label{eq:pseudotask_ensemble}
\theta^*_{\text{ens}} = \argmin_{\theta}\frac{1}{T}\sum_{t=1}^{T}\frac{1}{N_t}\sum_{i=1}^{N_t}\mathcal{L}\bigg(\bm{y}_{ti}, \frac{1}{D}\sum_{d=1}^D\hat{\bm{y}}_{tdi}\bigg).
\end{equation}
However, with linear decoders, \emph{training} with Eq.~\ref{eq:pseudotask_ensemble} is equivalent to training with a single decoder for each task, while training with Eq.~\ref{eq:pseudotask_loss} with multiple decoders yields more expressive training dynamics.
These ideas are developed more fully in the next section.

\subsection{Pseudo-tasks}
\label{subsec:pseudotasks}

Following the intuition that training $\mathcal{F}$ with multiple decoders amounts to solving the task in multiple ways, each ``way'' is defined by a \emph{pseudo-task}
\begin{equation}
\label{eq:pseudotask}
(\mathcal{D}_{td}, \mathcal{\theta}_{td}, \{\bm{x}_{ti}, \bm{y}_{ti}\}_{i=1}^{N_t})
\end{equation}
of the \emph{true} underlying task $\{\bm{x}_{ti}, \bm{y}_{ti}\}_{i=1}^{N_t}$. 
It is termed a pseudo-task because it derives from a true task, but has no fixed labels.
That is, for any fixed $\theta_{td}$, there are potentially many optimal outputs for $\mathcal{F}(\bm{x}_{ti})$.
When $D > 1$, training $\mathcal{F}$ amounts to training each task with multiple pseudo-tasks for each task at each gradient update step.
This process is the essence of PTA.

As a first step, this paper considers linear decoders, i.e. each $\mathcal{D}_{td}$ consists of a single dense layer of weights (any following nonlinearity can be considered part of the loss function).
Prior work has assumed that models for closely-related tasks differ only by a linear transformation \cite{Evgeniou:2004, Kang:2011, Argyriou:2008}.
Similarly, with linear decoders, distinct pseudo-tasks for the same task simulate multiple closely-related tasks.
When $\theta_{td}$ are considered fixed, the learning problem (Eq.~\ref{eq:pseudotask_loss}) reduces to
\begin{equation}
\label{eq:fixed_pseudotask_loss}
\theta_{\mathcal{F}}^* = \argmin_{\theta_{\mathcal{F}}}\frac{1}{TD}\sum_{t=1}^{T}\frac{1}{N_t}\sum_{i=1}^{N_t}\sum_{d=1}^D\mathcal{L}(\bm{y}_{ti}, \hat{\bm{y}}_{tdi}) \, .
\end{equation}
In other words, although the overall goal is to learn models for $T$ tasks, $\mathcal{F}$ is at each step optimized towards $DT$ pseudo-tasks.
Thus, training with multiple decoders may yield positive effects similar to training with multiple true tasks.

After training, the best model for a given task is selected from the final joint model, and used as the final model for that task (Eq.~\ref{eq:pseudotask_evaluation}).
Of course, using multiple decoders with identical architectures for a single task does not make the final learned predictive models more expressive.
It is therefore natural to ask whether including additional decoders has any fundamental effect on learning dynamics.
It turns out that even in the case of linear decoders, the training dynamics of using multiple pseudo-tasks strictly subsumes using just one.

\begin{definition}[Pseudo-task Simulation]
A set of pseudo-tasks $S_1$ \emph{simulates} another $S_2$ on $\mathcal{F}$ if for all $\theta_\mathcal{F}$ the gradient update to $\theta_\mathcal{F}$ when trained with $S_1$ is equal to that with $S_2$.
\end{definition}

\begin{theorem}[Augmented Training Dynamics]
\label{thm:augmented}
There exist differentiable functions $\mathcal{F}$ and sets of pseudo-tasks of a single task that cannot be simulated by a single pseudo-task of that task, even when all decoders are linear.
\end{theorem}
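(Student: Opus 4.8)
The plan is to prove the statement by exhibiting an explicit minimal counterexample and verifying it with a one‑line computation: a scalar underlying model trained on a single data point, equipped with two one‑dimensional linear decoders, whose combined gradient field on $\theta_{\mathcal F}$ provably cannot coincide with that of any single linear decoder. Since the claim is purely existential, nothing is lost by taking the example as simple as possible; the point is precisely that, even though multiple identical‑architecture decoders do not enlarge the final predictive model, they do enlarge the family of realizable $\theta_{\mathcal F}$‑updates.

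First I would fix the squared‑error loss $\mathcal{L}(\bm{y},\hat{\bm{y}})=\tfrac12\|\bm{y}-\hat{\bm{y}}\|^2$, take $\mathcal{F}(\bm{x};\theta_{\mathcal F})=\theta_{\mathcal F}\,\bm{x}$ with the single training example $\bm{x}=1$, $\bm{y}=1$ (so that $\mathcal{F}(\bm{x};\theta_{\mathcal F})=\theta_{\mathcal F}$, the Jacobian $\partial\mathcal F/\partial\theta_{\mathcal F}$ is the identity, and $\theta_{\mathcal F}$ ranges over all of $\mathbb{R}$), and introduce two linear decoders $w_1,w_2\in\mathbb{R}$. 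Then the training objective of Eq.~\ref{eq:pseudotask_loss} specializes to $\tfrac14\big((w_1\theta_{\mathcal F}-1)^2+(w_2\theta_{\mathcal F}-1)^2\big)$, so the gradient update to $\theta_{\mathcal F}$ is governed by the affine map $\theta_{\mathcal F}\mapsto\tfrac12\big((w_1^2+w_2^2)\theta_{\mathcal F}-(w_1+w_2)\big)$, whereas for a single linear decoder $w$ the corresponding update is governed by $\theta_{\mathcal F}\mapsto w^2\theta_{\mathcal F}-w$.

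Next I would apply the definition of simulation: two affine functions of $\theta_{\mathcal F}$ agree for all $\theta_{\mathcal F}$ exactly when their slopes and intercepts agree, so $\{w_1,w_2\}$ is simulated by a single decoder $w$ iff $w^2=\tfrac12(w_1^2+w_2^2)$ and $w=\tfrac12(w_1+w_2)$. Substituting the second identity into the first yields $\tfrac14(w_1+w_2)^2=\tfrac12(w_1^2+w_2^2)$, i.e.\ $(w_1-w_2)^2=0$, so no such $w$ exists whenever $w_1\neq w_2$ (for instance $w_1=0$, $w_2=1$); this establishes the theorem. As a byproduct, this also recovers the remark preceding Eq.~\ref{eq:pseudotask_ensemble}: the ``ensembled'' decoder $\bar w=\tfrac12(w_1+w_2)$ matches the intercept but has slope $\bar w^2\neq\tfrac12(w_1^2+w_2^2)$, so even ensemble‑style training differs from genuine multi‑decoder training.

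The step that requires the most care is honoring the ``for all $\theta_{\mathcal F}$'' quantifier in the definition: the candidate simulating decoder $w$ is fixed in advance as part of the pseudo‑task and may not be re‑chosen as $\theta_{\mathcal F}$ varies, so the argument really rests on $\mathcal{F}$ being expressive enough that sweeping $\theta_{\mathcal F}$ pins down both the quadratic part $W^\top W$ and the linear part $W^\top\bm y$ of the single‑decoder gradient field. The scalar linear $\mathcal{F}$ above already achieves this, and I would close by remarking that the obstruction is not special to it: for a generic differentiable $\mathcal{F}$, for $D>2$ decoders, and for $\dim(\bm{y})>1$, the same slope/intercept bookkeeping forces the single decoder to satisfy $W^\top W=\tfrac1D\sum_d W_d^\top W_d$ and (once the task labels span the output space) $W=\tfrac1D\sum_d W_d$, which are jointly impossible because $\big(\tfrac1D\sum_d W_d\big)^\top\big(\tfrac1D\sum_d W_d\big)\preceq\tfrac1D\sum_d W_d^\top W_d$ with equality only when all $W_d$ coincide.
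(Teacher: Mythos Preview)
Your argument is clean, but it hinges on an assumption the paper explicitly does \emph{not} make: that the simulating single pseudo-task must use the same learning rate as the multi-decoder setup. In the paper's proof the multi-decoder update uses rate $\alpha$ while the candidate single decoder is granted its own rate $\gamma>0$; this is consistent with the converse remark after the theorem, where simulation of one decoder by $D$ copies is achieved by taking $\alpha=\gamma/D$. Once you allow this scaling freedom, your scalar example collapses: matching the affine gradient fields only requires $\gamma w^{2}=\tfrac{\alpha}{2}(w_1^{2}+w_2^{2})$ and $\gamma w=\tfrac{\alpha}{2}(w_1+w_2)$, and whenever $w_1+w_2\neq 0$ these are solved by $w=(w_1^{2}+w_2^{2})/(w_1+w_2)$ and $\gamma=\alpha(w_1+w_2)^{2}/\bigl(2(w_1^{2}+w_2^{2})\bigr)>0$. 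In particular your illustrative choice $w_1=0$, $w_2=1$ is simulated by $w=1$ with $\gamma=\alpha/2$, so the claim ``no such $w$ exists whenever $w_1\neq w_2$'' fails under the paper's reading.

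This is precisely why the paper works with output dimension $M\geq 2$: with a free scalar $\gamma$ absorbing overall scale, one needs a second coordinate to extract a $\gamma$-independent invariant, namely the ratio $w_o^{i}/w_o^{j}$, and then vary $\theta_{\mathcal F}$ to make that ratio take two different required values. Your one-dimensional construction cannot produce such an invariant. If you want to keep the scalar-output route, you would need to choose decoders with $\sum_d w_d=0$ but $\sum_d w_d^{2}>0$ (e.g.\ $w_1=1$, $w_2=-1$), which forces $\gamma w=0$ and $\gamma w^{2}>0$ simultaneously; otherwise you must move to $M\geq 2$ as the paper does.
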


\begin{proof}
Consider a task with a single sample $(\bm{x}, y)$, where $y$ is a scalar. 
Suppose $\mathcal{L}$ (from Eq.~\ref{eq:fixed_pseudotask_loss}) computes mean squared error, $\mathcal{F}$ has output dimension $M$, and all decoders are linear, with bias terms omitted for clarity.
$\mathcal{D}_d$ is then completely specified by the vector $\bm{w}_d = \langle w_d^1, w_d^2, ..., w_d^M \rangle^\top$.
Suppose parameter updates are performed by gradient descent.
The update rule for $\theta_{\mathcal{F}}$ with fixed decoders $\{\mathcal{D}_d\}_{d=1}^D$ and learning rate $\alpha$ is then given by
\begin{equation}
\label{eq:update_rule}
\theta_{\mathcal{F}} \coloneqq \theta_{\mathcal{F}} - \alpha \sum_{d=1}^D\nabla_{\mathcal{F}}\big(y - \bm{w}_d^\top\mathcal{F}(\bm{x}; \theta_{\mathcal{F}})\big)^2 \, .
\end{equation}
For a single fixed decoder to yield equivalent behavior, it must have equivalent update steps.
The goal then is to choose $(\bm{x}, y)$, $\mathcal{F}$, $\{\theta_k\}_{k=1}^K$, $\{\bm{w}_d\}_{d=1}^D$, and $\alpha > 0$, such that there are no $\bm{w}_o$, $\gamma > 0$, for which $\forall \, k$
\begin{multline}
\label{eq:condition_to_contradict}
\alpha \sum_{d=1}^D\nabla_{\mathcal{F}}(y - \bm{w}_d^\top\mathcal{F}(\bm{x}; \theta_{k}))^2 =\\
\gamma\nabla_{\mathcal{F}}(y - \bm{w}_o^\top\mathcal{F}(\bm{x}; \theta_{k}))^2\\
\implies 
\alpha \sum_{d=1}^D2(y - \bm{w}_d^\top\mathcal{F}(\bm{x}; \theta_{k}))\bm{w}_d^\top J_{\mathcal{F}}(\bm{x}; \theta_{k}) =\\
2\gamma(y - \bm{w}_o^\top\mathcal{F}(\bm{x}; \theta_{k}))\bm{w}_o^\top J_\mathcal{F}(\bm{x}; \theta_{k}) \, ,
\end{multline}
where $J_\mathcal{F}$ is the Jacobian of $\mathcal{F}$.
By choosing $\mathcal{F}$ and $\{\theta_k\}_{k=1}^K$ so that all $J_\mathcal{F}(\bm{x}; \theta_{k})$ have full row rank, Eq.~\ref{eq:condition_to_contradict} reduces to
\begin{multline}
\label{eq:simplified_condition}
\alpha \sum_{d=1}^D(y - \bm{w}_d^\top\mathcal{F}(\bm{x}; \theta_{k}))w_d^i =\\
\gamma (y - \bm{w}_o^\top\mathcal{F}(\bm{x}; \theta_{k}))w_o^i \ \ \ \forall \ i \in 1..M.
\end{multline}
Choosing $\mathcal{F}$, $\{\theta_k\}_{k=1}^K$, $\{\bm{w}_d\}_{d=1}^D$, and $\alpha > 0$ such that the left hand side of Eq.~\ref{eq:simplified_condition} is never zero, we can safely write
\begin{multline}
\frac{\sum_{d=1}^D(y - \bm{w}_d^\top\mathcal{F}(\bm{x}; \theta_{k}))w_d^i}{(y - \bm{w}_o^\top\mathcal{F}(\bm{x}; \theta_{k}))w_o^i}=\\
\frac{\sum_{d=1}^D(y - \bm{w}_d^\top\mathcal{F}(\bm{x}; \theta_{k}))w_d^j}{(y - \bm{w}_o^\top\mathcal{F}(\bm{x}; \theta_{k}))w_o^j} \ \ \ \forall \  (i,j) \\
\implies
\frac{\sum_{d=1}^D(y - \bm{w}_d^\top\mathcal{F}(\bm{x}; \theta_{k}))w_d^i}{\sum_{d=1}^D(y - \bm{w}_d^\top\mathcal{F}(\bm{x}; \theta_{k}))w_d^j} = \frac{w_o^i}{w_o^j} \, .
\end{multline}
Then, since $\bm{w}_o$ is fixed, it suffices to find $\mathcal{F}(\bm{x}; \theta_1)$, $\mathcal{F}(\bm{x}; \theta_2)$ such that for some $(i, j)$
\begin{multline}
\frac{\sum_{d=1}^D(y - \bm{w}_d^\top\mathcal{F}(\bm{x}; \theta_{1}))w_d^i}{\sum_{d=1}^D(y - \bm{w}_d^\top\mathcal{F}(\bm{x}; \theta_{1}))w_d^j} \neq \\ \frac{\sum_{d=1}^D(y - \bm{w}_d^\top\mathcal{F}(\bm{x}; \theta_{2}))w_d^i}{\sum_{d=1}^D(y - \bm{w}_d^\top\mathcal{F}(\bm{x}; \theta_{2}))w_d^j} \, .
\end{multline}
For instance, with $D = 2$, choosing $y = 1$, $w_1 = \langle 2, 3 \rangle^\top$, $w_2 = \langle 4, 5 \rangle^\top$,  $\mathcal{F}(\bm{x}; \theta_1) = \langle 6, 7 \rangle^\top$,  and $\mathcal{F}(\bm{x}; \theta_1) = \langle 8, 9 \rangle^\top$ satisfies the inequality.
Note $\mathcal{F}(\bm{x}; \theta_1)$ and $\mathcal{F}(\bm{x}; \theta_2)$ can be chosen arbitrarily since $\mathcal{F}$ is only required to be differentiable, e.g., implemented by a neural network.
\end{proof}

Showing that a single pseudo-task can be simulated by $D$ pseudo-tasks for any $D > 1$ is more direct:
For any $\bm{w}_o$ and $\gamma$, choose $\bm{w}_d = \bm{w}_o \ \forall \ d \in 1..D$ and $\alpha = \nicefrac{\gamma}{D}$.
Further extensions to tasks with more samples, higher dimensional outputs, and cross-entropy loss are straightforward.
Note that this result is related to work on the dynamics of deep linear models \cite{Saxe:2013}, in that adding additional linear structure complexifies training dynamics.
However, training an ensemble directly, i.e., via Eq.~\ref{eq:pseudotask_ensemble}, does \emph{not} yield augmented training dynamics, since
\begin{multline}
\label{eq:ensemble_collapse}
\frac{1}{D}\sum_{d=1}^D\hat{\bm{y}}_{tdi} =\frac{1}{D}\sum_{d=1}^D\bm{W}_{td}^\top\mathcal{F}(\bm{x}_{ti}; \theta_{\mathcal{F}}) \\
\implies \bm{W}_{to}^\top = \frac{1}{D}\sum_{d=1}^D\bm{W}_{td}^\top \text{ and } \beta = \alpha \, .
\end{multline}

Now that we know that training with additional pseudo-tasks yields augmented training dynamics that may be exploited, the question is how to take advantage of these dynamics in practice.
The next section introduces methods to address this question.

\subsection{Control of Multiple Pseudo-task Trajectories}

Given linear decoders, the primary goal is to optimize $\mathcal{F}$; if an optimal $\mathcal{F}$ were found, optimal decoders for each task could be derived analytically.
So, given multiple linear decoders for each task, how should their induced pseudo-tasks be controlled to maximize the benefit to $\mathcal{F}$?
For one, their weights $\{\bm{W}_{td}\}_{d=1}^D$ must not all be equal, otherwise we would have $\bm{W}_{to} = \bm{W}_{t1}$ and $\gamma = D\alpha$ in the proof of Theorem~\ref{thm:augmented}.
Following Eq.~\ref{eq:pseudotask_loss}, decoders can be trained jointly with $\mathcal{F}$ via gradient-based methods, so that they learn to work well with $\mathcal{F}$.
Through optimization, a trained decoder induces a trajectory of pseudo-tasks.
Going beyond this \emph{implicit} control, Algorithm~\ref{alg:pta_training} gives a high-level framework for applying \emph{explicit} control to pseudo-task trajectories.
\begin{algorithm}
\caption{\label{alg:pta_training} PTA Training Framework}
\small
\begin{algorithmic}[1]
\STATE Given $T$ tasks $\{\{\bm{x}_{ti}, \bm{y}_{ti}\}_{i=1}^{N_t}\}_{t=1}^T$, and $D$ decoders per task
\STATE $\{\{\theta_{\mathcal{D}_{td}}\}_{d=1}^D\}_{t=1}^T \leftarrow \textit{DecInitialize}()$
\STATE Initialize $\theta_{\mathcal{F}}$
\STATE Initialize decoder costs $c_{td} \leftarrow \infty \ \forall \ (t, d)$
\WHILE{not done training}
	\FOR[$\vartriangleright M$ is meta-iteration length]{$m=1$ \TO $M$}
		\STATE Update $\theta_{\mathcal{F}}$ and $\theta_{\mathcal{D}_{td}}$ via a joint gradient step. \label{ln:gradient_step}
	\ENDFOR
	\FOR{$t=1$ \TO $T$}
		\FOR{$d=1$ \TO $D$}
			\STATE $c_{td} \leftarrow \text{evaluate}(\theta_{\mathcal{D}_{td}}, \theta_\mathcal{F}, t)$ \COMMENT{$\vartriangleright$ e.g., get validation error}
		\ENDFOR
		\FOR{$d=1$ \TO $D$}
			\STATE $\theta_{\mathcal{D}_{td}} \leftarrow \textit{DecUpdate}\big(d, \{\theta_{\mathcal{D}_{td_o}}, c_{td_o}\}_{d_o = 1}^D\big)$
		\ENDFOR
	\ENDFOR
\ENDWHILE
\RETURN $\big(\{\{\theta_{\mathcal{D}_{td}}\}_{d=1}^D\}_{t=1}^T, \theta_{\mathcal{F}}\big)$
\end{algorithmic}
\end{algorithm}

An instance of the algorithm is parameterized by choices for $\textit{DecInitialize}$, which defines how decoders are initialized; and $\textit{DecUpdate}$, which defines non-gradient-based updates to decoders every $M$ gradient steps, i.e., every \emph{meta-iteration}, based on the performance of each decoder ($\textit{DecUpdate}$ defaults to no-op).
As a first step, several intuitive methods are evaluated in this paper for instantiating Algorithm~\ref{alg:pta_training}.
These methods can be used together in any combination:

\textbf{Independent Initialization (I)}
$\textit{DecInitialize}$ randomly initializes all $\theta_{\mathcal{D}_{td}}$ independently.
This is the obvious initialization method, and is assumed in all methods below.

\textbf{Freeze (F)}
$\textit{DecInitialize}$ freezes all decoder weights except $\theta_{\mathcal{D}_{t1}}$ for each task.
Frozen weights do not receive gradient updates in Line~\ref{ln:gradient_step} of Algorithm~\ref{alg:pta_training}.
Because they cannot adapt to $\mathcal{F}$, constant pseudo-task trajectories provide a stricter constraint on $\mathcal{F}$.
One decoder is left unfrozen so that the optimal model for each task can still be learned.

\textbf{Independent Dropout (D)}
$\textit{DecInitialize}$ sets up the dropout layers preceding linear decoder layers to drop out values independently for each decoder.
Thus, even when the weights of two decoders for a task are equal, their resulting gradient updates to $\mathcal{F}$ and to themselves will be different.

For the next three methods, let $c_{t}^{\text{min}} = \min(c_{t1},\ldots,c_{tD})$.

\textbf{Perturb (P)}
$\textit{DecUpdate}$ adds noise $\sim \mathcal{N}(\bm{0}, \epsilon_p\bm{I})$ to each $\theta_{\mathcal{D}_{td}}$ for all $d$ where $c_{td} \neq c_t^\text{min}$.
This method ensures that $\theta_{\mathcal{D}_{td}}$ are sufficiently distinct before each training period.

\textbf{Hyperperturb (H)}
Like \emph{Perturb}, except $\textit{DecUpdate}$ updates the hyperparameters of each decoder other than the best for each task, by adding noise $\sim \mathcal{N}(0, \epsilon_h)$.
In this paper, each decoder has only one hyperparameter: the dropout rate of any \emph{Independent Dropout} layer, because adapting dropout rates can be beneficial \cite{Ba:2013, Li:2016, Jaderberg:2017}.

\textbf{Greedy (G)}
For each task, let $\theta_t^\text{min}$ be the weights of a decoder with cost $c_t^\text{min}$.
$\textit{DecUpdate}$ updates all $\theta_{td} \coloneqq \theta_t^\text{min}$, including hyperparameters.
This biases training to explore the highest-performing areas of the pseudo-task space.
When combined with any of the previous three methods, decoder weights are still ensured to be distinct through training.

Combinations of these six methods induce an initial class of PTA training algorithms PTA-* for the case of linear decoders.
The next section evaluates eight representative combinations of these methods, i.e., PTA-I, PTA-F, \mbox{PTA-P}, PTA-D, PTA-FP, PTA-GP, PTA-GD, and PTA-HGD, in various experimental settings.
Note that H and G are related to methods that copy the weights of the entire network \cite{Jaderberg:2017}.
Also note that, in a possible future extension to the nonlinear case, the space of possible PTA control methods becomes much more broad, as will be discussed in Section~\ref{sec:discussion}.

\section{Experiments}
\label{sec:experiments}

In this section, PTA methods are evaluated and shown to excel in a range of settings: (1) single-task character recognition; (2) multitask character recognition; (3) single-task sentiment classification; and (4) multitask visual attribute classification.
All experiments are implemented using the Keras framework \cite{Chollet:2015}. For PTA-P and PTA-GP, $\epsilon_p = 0.01$; for PTA-HGD, $\epsilon_h = 0.1$ and dropout rates range from 0.2 to 0.8.
A dropout layer with dropout rate initialized to 0.5 precedes each decoder.

\subsection{Omniglot Character Recognition}
\label{subsec:omni}

This section evaluates and compares the various PTA methods on Omniglot character recognition \cite{Lake:2015}.
The Omniglot dataset consists of 50 alphabets of handwritten characters, each of which induces its own character recognition task.
Each character instance is a $105\times105$ black-and-white image, and each character has 20 instances, each drawn by a different individual.
To reduce variance and improve reproducibility of experiments, a fixed random 50/20/30\% train/validation/test split was used for each task.
(These splits will be released with the paper.)
Methods are evaluated with respect to all 50 tasks as well as a subset consisting of the first 20 tasks in a fixed random ordering of alphabets used in previous work \cite{Meyerson:2018}.
The underlying model $\mathcal{F}$ for all setups is a simple four layer convolutional network that has been shown to yield good performance on Omniglot \cite{Meyerson:2018}.
This model has four convolutional layers each with 53 filters and $3\times3$ kernels, and each followed by a $2\times2$ max-pooling layer and dropout layer with 0.5 dropout probability.
At each meta-iteration, 250 gradient updates are performed via Adam \cite{Kingma:14}; each setup is trained for 100 meta-iterations.

\subsubsection{Omniglot: Single-task Learning}
\label{subsubsec:omni_stl}

The single-task learning case is considered first.
For each of the 20 initial Omniglot tasks, the eight PTA methods were applied to the task with 2, 3, and 4 decoders.
At least three trials were run with each setup; the mean performance averaged across trials and tasks is shown in Figure~\ref{fig:omni_stl}.
\begin{figure}
\centering
\includegraphics[width=0.9\linewidth]{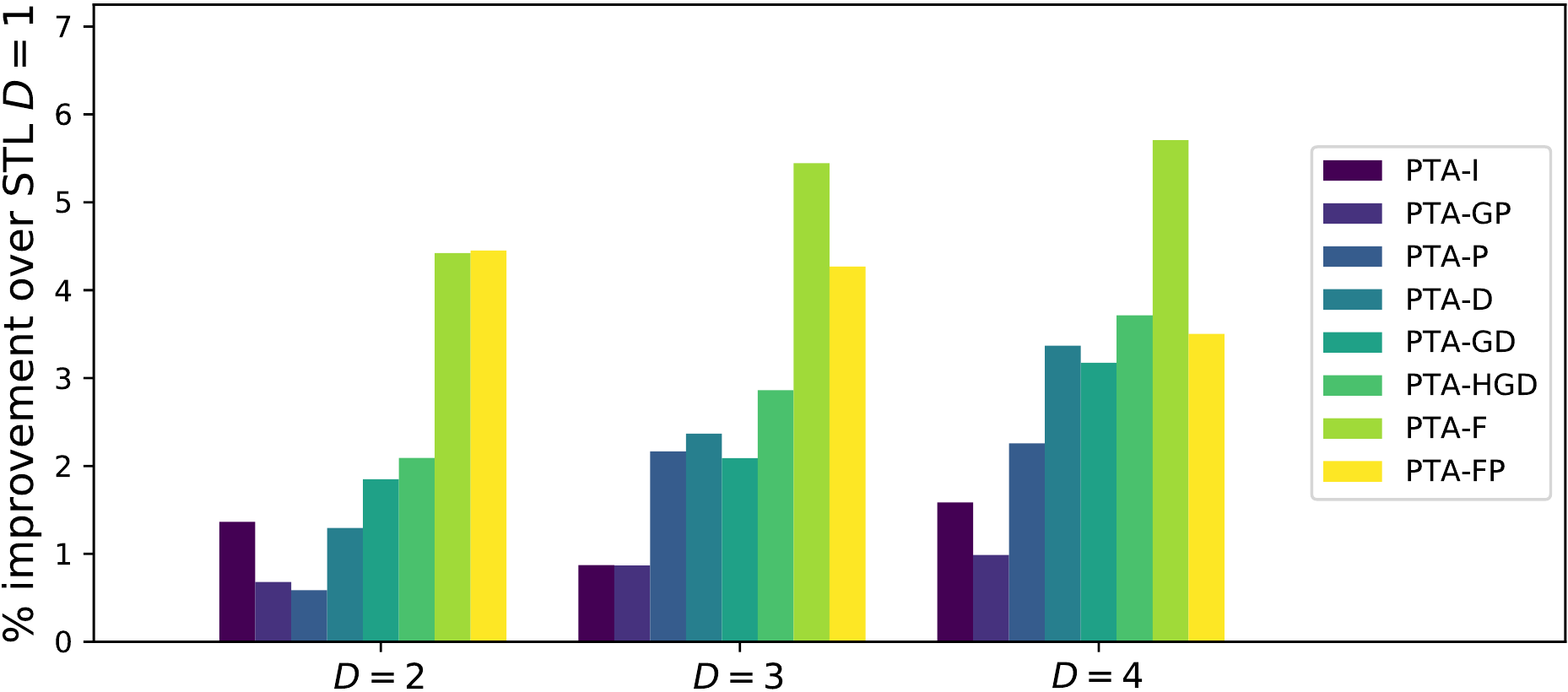}
\caption{\label{fig:omni_stl} \textbf{Omniglot single-task learning results.} For each number of decoders $D$, mean improvement (absolute \% decrease in error) over $D=1$ is plotted for each setup, averaged across all tasks.
All setups outperform the baseline.
PTA-F and PTA-FP performs best, as this problem benefits from strong regularization.
The mean improvement across all methods also increases with $D$: 1.86\% for $D=2$; 2.33\% for $D=3$; and 2.70\% for $D=4$.
}
\end{figure}
Every PTA setup outperforms the baseline, i.e., training with a single decoder.
The methods that use decoder freezing, \mbox{PTA-F} and PTA-FP, perform best, showing how this problem can benefit from strong regularization.
Notably, the mean improvement across all methods increases with $D$: 1.86\% for $D=2$; 2.33\% for $D=3$; and 2.70\% for $D=4$.
Like MTL can benefit from adding more tasks \cite{Caruana:1998, Hashimoto:2017, Jaderberg:2016}, single-task learning can benefit from adding more pseudo-tasks.

\subsubsection{Omniglot: Multitask Learning}
\label{subsubsec:omni_mtl}

Omniglot models have also been shown to benefit from MTL \cite{Maclaurin:2015, Rebuffi:2017, Yang:2017, Meyerson:2018}.
This section extends the experiments in Section~\ref{subsubsec:omni_stl} to MTL.
The setup is exactly the same, except now the underlying convolutional model is fully shared across all tasks for each method.
The results are shown in Figure~\ref{fig:omni_mtl}.
\begin{figure}
\centering
\includegraphics[width=0.9\linewidth]{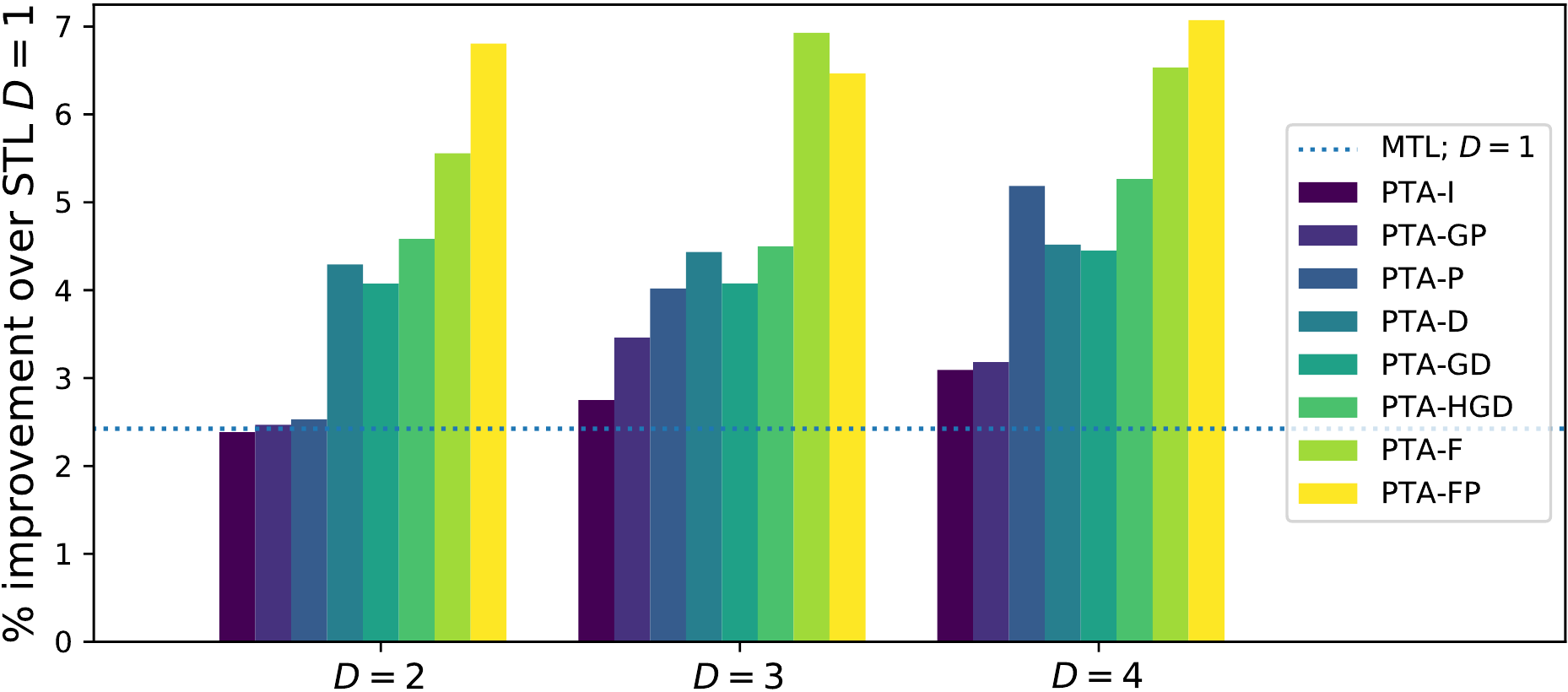}
\caption{\label{fig:omni_mtl} \textbf{Omniglot multitask learning results.} For each number of decoders $D$, mean improvement (absolute \% decrease in error) across all tasks is plotted over STL with $D=1$.
All setups outperform the STL baseline, and all except PTA-I with $D=2$ outperform the MTL baseline.
Again, PTA-F and PTA-FP perform best, and the mean improvement across all methods increases with $D$: 3.63\% for $D=2$; 4.07\% for $D=3$; and 4.37\% for $D=4$.
}
\end{figure}
All setups outperform the STL baseline, and all, except for PTA-I with $D=2$, outperform the MTL baseline.
Again, PTA-F and PTA-FP perform best, and the mean improvement across all methods increases with $D$.
The results show that although PTA implements behavior similar to MTL, when combined, their positive effects are complementary.
Finally, to test the scalability of these results, three diverse PTA methods with $D=4$ and $D=10$ were applied to the complete 50-task dataset: PTA-I, because it is the baseline PTA method; \mbox{PTA-F}, because it is simple and high-performing; and \mbox{PTA-HGD}, because it is the most different from PTA-F, but also relatively high-performing. The results are given in Table~\ref{tab:omni_results}.
\begin{table}
\centering
\caption{\label{tab:omni_results}\textbf{Omniglot 50-task results.}
Test error averaged across all tasks for each setup is shown.
Overall, the performance gains from MTL complement those from PTA, with PTA-F again the highest-performing and most robust method.}
\vspace{0.08in}
\footnotesize
\begin{tabular}{l c c c c} \toprule
Method & \multicolumn{2}{c}{Single-task Learning} & \multicolumn{2}{c}{Multitask Learning} \\ \midrule
Baseline & \multicolumn{2}{c}{35.49} & \multicolumn{2}{c}{29.02} \\ \midrule
 & $D=4$ & $D=10$ & $D=4$ & $D=10$ \\ \midrule
 PTA-I & 31.72 & 32.56 & 27.26 & 24.50 \\
 PTA-HGD & 31.63 & 30.39 & 25.77 & 26.55 \\
 PTA-F & \textbf{29.37} & \textbf{28.48} & \textbf{23.45} & \textbf{23.36} \\ \midrule
 PTA-Mean & 30.91 & 30.48 & 25.49 & 24.80 \\ \bottomrule
\end{tabular}
\end{table}
The results agree with the 20-task results, with all methods improving upon the baseline, and performance overall improving as $D$ is increased.

\subsection{IMDB Sentiment Analysis}

The experiments in this section apply PTA to LSTM models in the IMDB sentiment classification problem \cite{Maas:2011}.
The dataset consists of 50K natural-language movie reviews, 25K for training and 25K for testing.
There is a single binary classification task: whether a review is positive or negative.
As in previous work, 2500 of the training reviews are withheld for validation \cite{McCann:2017}.
The underlying model $\mathcal{F}$ is the off-the-shelf LSTM model for IMDB provided by Keras, with no parameters or preprocessing changed.
In particular, the vocabulary is capped at 20K words, the LSTM layer has 128 units and dropout rate 0.2,  and each meta-iteration consists of one epoch of training with Adam \cite{Kingma:14}.
This is not a state-of-the-art model, but it is a very different architecture from that used in Omniglot, and therefore serves to demonstrate the broad applicability of PTA.

The final three PTA methods from Section~\ref{subsec:omni} were evaluated with 4 and 10 decoders (Table~\ref{tab:imdb}).
\begin{table}
\centering 
\caption{\label{tab:imdb} \textbf{IMDB Results.}
All PTA methods outperform the LSTM baseline.
The best performance is achieved by PTA-HGD with $D=10$.
This method receives a substantial boost from increasing the number of decoders from 4 to 10, as the greedy algorithm gets to perform broader search.
On the other hand, PTA-I and PTA-F do not improve with the additional decoders, suggesting that, without careful control, too many decoders can overconstrain $\mathcal{F}$. 
}
\footnotesize
\vspace{0.08in}
\begin{tabular}{ l c c} \toprule
Method & \multicolumn{2}{c}{Test Accuracy \%} \\ \midrule 
LSTM Baseline ($D=1$) & \multicolumn{2}{c}{82.75 ($\pm 0.13$)} \\ \midrule
 & $D=4$ & $D=10$ \\ \midrule
PTA-I & 83.20 ($\pm 0.07$) & 83.02 ($\pm 0.11$) \\
PTA-HGD & 83.22 ($\pm 0.05$) & \textbf{83.51} ($\pm 0.08$) \\ 
PTA-F & \textbf{83.30} ($\pm 0.12$) & 83.30 ($\pm 0.08$) \\ \bottomrule
\end{tabular}
\end{table}
As in Section~\ref{subsec:omni}, all PTA methods outperform the baseline.
In this case, however, PTA-HGD with $D=10$ performs best.
Notably, PTA-I and PTA-F do not improve from $D=4$ to $D=10$, suggesting that underlying models have a critical point after which, without careful control, too many decoders can be overconstraining.
To contrast PTA with standard regularization, additional Baseline experiments were run with dropout rates $[0.3, 0.4, ..., 0.9]$. At 0.5 the best accuracy was achieved: 83.14 ($\pm 0.05$), which is less than all PTA variants except PTA-I with $D=10$, thus confirming that PTA adds value.
To help understand what each PTA method is actually doing, snapshots of decoder parameters taken every epoch are visualized  in Figure~\ref{fig:imdb_viz} with t-SNE \cite{Maaten:2008} using cosine distance.
\begin{figure}
\centering
\scriptsize
\hspace*{-20pt} \begin{tabular}{c c c}
\includegraphics[width=0.33\linewidth]{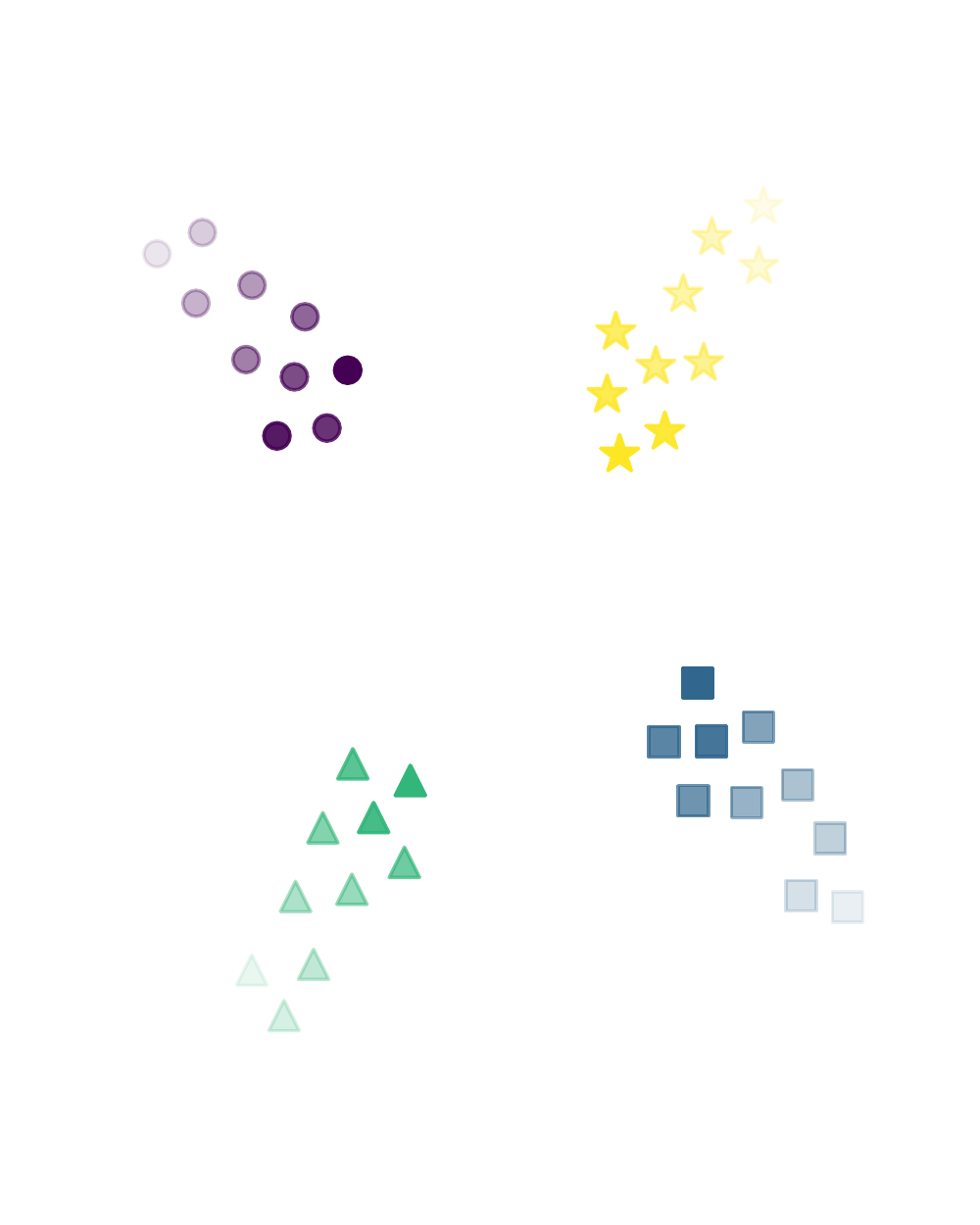} & \includegraphics[width=0.33\linewidth]{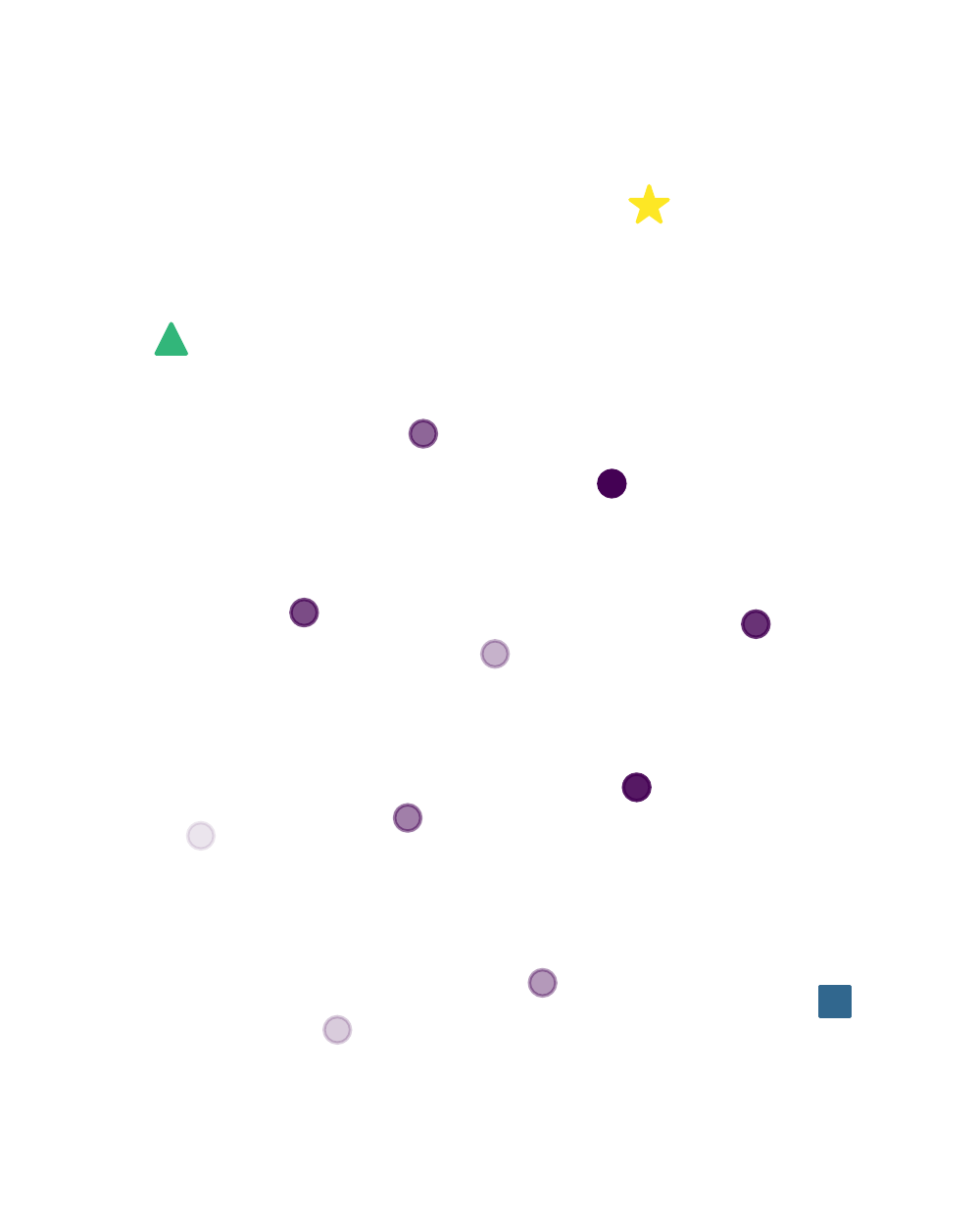} &
\includegraphics[width=0.33\linewidth]{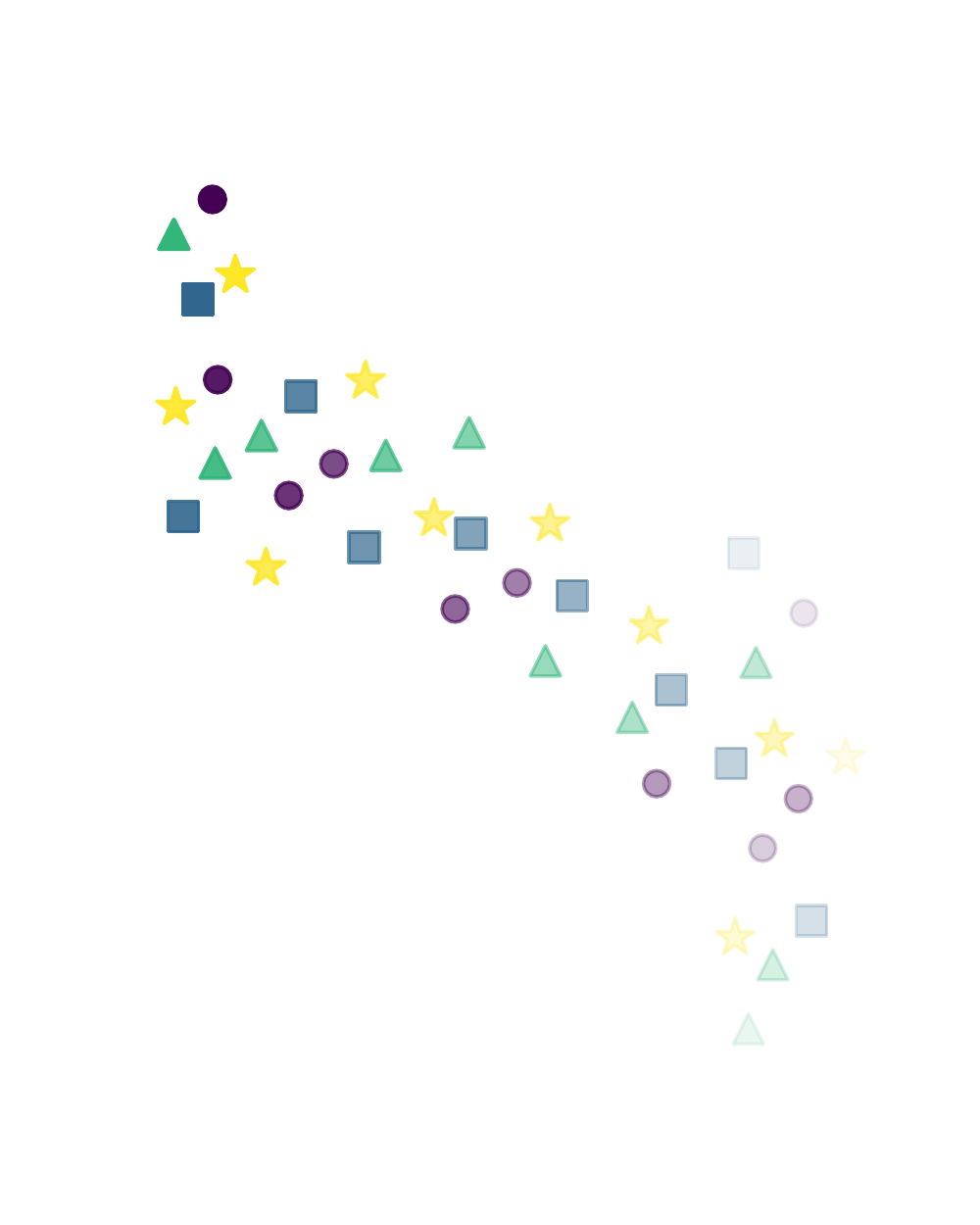}\\
(a) PTA-I & (b) PTA-F & (c) PTA-HGD \\
\end{tabular}
\caption{\label{fig:imdb_viz} \textbf{Pseudo-task Trajectories.}
t-SNE \cite{Maaten:2008} projections of pseudo-task trajectories, for runs of PTA-I, PTA-F, and PTA-HGD on IMDB.
Each shape corresponds to a particular decoder; each point is a projection of the length-129 weight vector at the end of an epoch, with opacity increasing by epoch.
The behavior matches our intuition for what should be happening in each case: (a) When decoders are only initialized independently, their pseudo-tasks gradually converge; (b) when all but one decoder is frozen, the unfrozen one settles between the others; (c) when a greedy method is used, decoders perform local exploration as they traverse the pseudo-task space together.
}
\end{figure}
The behavior matches our intuition for what should be happening in each case: When decoders are only initialized independently, their pseudo-tasks gradually converge; when all but one decoder is frozen, the unfrozen one settles between the others; and when a greedy method is used, decoders perform local exploration as they traverse the pseudo-task space together.

\subsection{CelebA Facial Attribute Recognition}

To further test applicability and scalability, PTA was evaluated on CelebA large-scale facial attribute recognition \cite{Liu:2015b}.
The dataset consists of $\approx$200K $178\times218$ color images.
Each image has binary labels for 40 facial attributes;
each attribute induces a binary classification task.
Facial attributes are related at a high level that deep models can exploit, making CelebA a popular deep MTL benchmark.
Thus, this experiment focuses on the MTL setting.

The underlying model was Inception-ResNet-v2 \cite{Szegedy:2016}, with weights initialized from training on ImageNet \cite{Russakovsky:2015}.
Due to computational constraints, only one PTA method was evaluated: \mbox{PTA-HGD} with $D=10$.
PTA-HGD was chosen because of its superior performance on IMDB, and because CelebA is a large-scale problem that may require extended pseudo-task exploration; Figure~\ref{fig:imdb_viz} shows how PTA-HGD may support such exploration above other methods.
Each meta-iteration consists of 250 gradient updates with batch size 32.
The optimizer schedule is co-opted from previous work \cite{Gunther:2017}: RMSprop is initialized with a learning rate of $10^{-4}$, which is decreased to $10^{-5}$ and $10^{-6}$ when the model converges.
PTA-HGD and the MTL baseline were each trained three times.
The computational overhead of PTA-HGD is marginal, since the underlying model has 54\mbox{M} parameters, while each decoder has only 1.5\mbox{K}.
Table~\ref{tab:celeba_results} shows the results.
\begin{table}
\centering
\caption{\label{tab:celeba_results} \textbf{CelebA results.}
Comparison of PTA against state-of-the-art methods for CelebA, with and without ensembling.
Test error is averaged across all attributes.
\mbox{PTA-HGD} outperforms all other methods, establishing a new state-of-the-art in this benchmark.
}
\vspace{0.08in}
{
\footnotesize
\begin{tabular}{l c} \toprule
MTL Method & \% Error \\ \midrule
Single Task \cite{He:2017} & 10.37 \\
MOON \cite{Rudd:2016} & 9.06 \\
Adaptive Sharing \cite{Lu:2016} & 8.74 \\
MCNN-AUX \cite{Hand:2017} & 8.71 \\
Soft Order \cite{Meyerson:2018} & 8.64 \\
VGG-16 MTL \cite{Lu:2016} & 8.56 \\
Adaptive Weighting \cite{He:2017} & 8.20 \\
AFFACT \cite{Gunther:2017} (best of 3) & 8.16 \\ \midrule
MTL Baseline (Ours; mean of 3) & 8.14 \\
PTA-HGD, $D=10$ (mean of 3) & \textbf{8.10} \\  \midrule
Ensemble of 3: AFFACT \cite{Gunther:2017} & 8.00 \\
Ensemble of 3: PTA-HGD, $D=10$ & \textbf{7.94} \\  \bottomrule
\end{tabular}
}
\end{table}
\mbox{PTA-HGD} outperforms all other methods, thus establishing a new state-of-the-art in CelebA.
Figure~\ref{fig:drop_rates} shows resulting dropout schedules for \mbox{PTA-HGD}.
\begin{figure}
\centering
\includegraphics[width=0.98\linewidth]{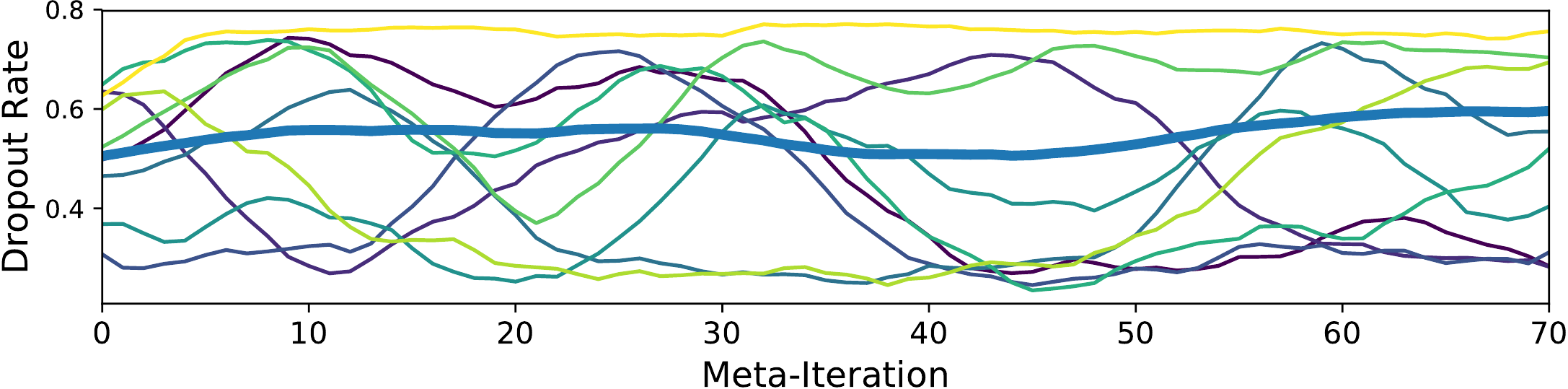}
\caption{\label{fig:drop_rates} \textbf{CelebA dropout schedules.} 
The thick blue line shows the mean dropout schedule across all 400 pseudo-tasks in a run of PTA-HGD.
Each of the remaining lines shows the schedule of a particular task, averaged across their 10 pseudo-tasks.
All lines are plotted with a simple moving average of length 10.
The diversity of schedules shows that the system is taking advantage of PTA-HGD's ability to adapt task-specific hyperparameter schedules.}
\end{figure}
No one type of schedule dominates; \mbox{PTA-HGD} gives each task the flexibility to adapt its own schedule via the performance of its pseudo-tasks.

\section{Discussion and Future Work}
\label{sec:discussion}

The experiments in this paper demonstrated that PTA is broadly applicable, and that it can boost performance in a variety of single-task and multitask problems.
Training with multiple decoders for a single task allows a broader set of models to be visited. 
If these decoders are diverse and perform well, then the shared structure has learned to solve the same problem in diverse ways, which is a hallmark of robust intelligence.
In the MTL setting, controlling each task's pseudo-tasks independently makes it possible to discover diverse task-specific learning dynamics (Figure~\ref{fig:drop_rates}).
Increasing the number of decoders can also increase the chance that pairs of decoders align well across tasks.

The crux of PTA is the method for controlling pseudo-task trajectories.
Experiments showed that the amount of improvement from PTA is dependent on the choice of control method.
Different methods exhibit highly structured but different behavior (Figure~\ref{fig:imdb_viz}).
The success of initial methods indicates that developing more sophisticated methods is a promising avenue of future work.
In particular, methods from Section~\ref{subsec:single_task_separate_models} can be co-opted to control pseudo-task trajectories more effectively.
Consider, for instance, the most involved method evaluated in this paper: PTA-HGD.
This online decoder search method could be replaced by methods that generate new models more intelligently \cite{Bergstra:2011, Snoek:2012, Miikkulainen:2017, Real:2017, Zoph:2017}.
Such methods will be especially useful in extending PTA beyond the linear case considered in this paper, to complex nonlinear decoders.
For example, since a set of decoders is being trained in parallel, it could be natural to use neural architecture search methods \cite{Miikkulainen:2017, Real:2017, Zoph:2017} to search for optimal decoder architectures.
While ensembling separate PTA models is useful (Table~\ref{tab:celeba_results}), in preliminary tests na\"ively ensembling decoders for evaluation (Eq.~\ref{eq:pseudotask_ensemble}) did not yield remarkable improvements over the single best (Eq.~\ref{eq:pseudotask_evaluation}).
In a further preliminary test with IMDB, when $\mathcal{F}$ was \emph{not} shared, PTA-I outperformed PTA-HGD and PTA-F, indicating 
  that the latter two methods address dynamics that arise in joint training but not na\"ive ensemble training.
Developing PTA training methods for generating a more complementary set of decoders, coupled with effective methods for ensembling this set, could push performance even further, especially when decoders are more complex.

\section{Conclusion}
\label{sec:conclusion}

This paper has introduced \emph{pseudo-task augmentation}, a method that makes it possible to apply ideas from deep MTL to single-task learning.
By training shared structure to solve the same task in multiple ways, pseudo-task augmentation simulates training with multiple closely-related tasks, yielding performance improvements similar to those in MTL.
However, the methods are complementary: combining pseudo-task augmentation with MTL results in further performance gains.
Broadly applicable, pseudo-task augmentation is thus a promising method for improving deep learning performance.
Overall, this paper has taken first steps towards a future class of efficient model search algorithms that exploit intratask parameter sharing.

\section*{Acknowledgements}
We would like to thank Xin Qiu, Antoine Saliou, and the reviewers for providing valuable feedback that helped to solidify this work.

{\small
\bibliography{meyerson-icml2018}

\begin{thebibliography}{8}
\providecommand{\natexlab}[1]{#1}
\providecommand{\url}[1]{\texttt{#1}}
\expandafter\ifx\csname urlstyle\endcsname\relax
  \providecommand{\doi}[1]{doi: #1}\else
  \providecommand{\doi}{doi: \begingroup \urlstyle{rm}\Url}\fi

\bibitem[Author(2018)]{anonymous}
Author, N.~N.
\newblock Suppressed for anonymity, 2018.

\bibitem[Duda et~al.(2000)Duda, Hart, and Stork]{DudaHart2nd}
Duda, R.~O., Hart, P.~E., and Stork, D.~G.
\newblock \emph{Pattern Classification}.
\newblock John Wiley and Sons, 2nd edition, 2000.

\bibitem[Kearns(1989)]{kearns89}
Kearns, M.~J.
\newblock \emph{Computational Complexity of Machine Learning}.
\newblock PhD thesis, Department of Computer Science, Harvard University, 1989.

\bibitem[Langley(2000)]{langley00}
Langley, P.
\newblock Crafting papers on machine learning.
\newblock In Langley, Pat (ed.), \emph{Proceedings of the 17th International
  Conference on Machine Learning (ICML 2000)}, pp.\  1207--1216, Stanford, CA,
  2000. Morgan Kaufmann.

\bibitem[Michalski et~al.(1983)Michalski, Carbonell, and
  Mitchell]{MachineLearningI}
Michalski, R.~S., Carbonell, J.~G., and Mitchell, T.~M. (eds.).
\newblock \emph{Machine Learning: An Artificial Intelligence Approach, Vol. I}.
\newblock Tioga, Palo Alto, CA, 1983.

\bibitem[Mitchell(1980)]{mitchell80}
Mitchell, T.~M.
\newblock The need for biases in learning generalizations.
\newblock Technical report, Computer Science Department, Rutgers University,
  New Brunswick, MA, 1980.

\bibitem[Newell \& Rosenbloom(1981)Newell and Rosenbloom]{Newell81}
Newell, A. and Rosenbloom, P.~S.
\newblock Mechanisms of skill acquisition and the law of practice.
\newblock In Anderson, J.~R. (ed.), \emph{Cognitive Skills and Their
  Acquisition}, chapter~1, pp.\  1--51. Lawrence Erlbaum Associates, Inc.,
  Hillsdale, NJ, 1981.

\bibitem[Samuel(1959)]{Samuel59}
Samuel, A.~L.
\newblock Some studies in machine learning using the game of checkers.
\newblock \emph{IBM Journal of Research and Development}, 3\penalty0
  (3):\penalty0 211--229, 1959.

\end{thebibliography}


\begin{thebibliography}{64}
\providecommand{\natexlab}[1]{#1}
\providecommand{\url}[1]{\texttt{#1}}
\expandafter\ifx\csname urlstyle\endcsname\relax
  \providecommand{\doi}[1]{doi: #1}\else
  \providecommand{\doi}{doi: \begingroup \urlstyle{rm}\Url}\fi

\bibitem[Argyriou et~al.(2008)Argyriou, Evgeniou, and Pontil]{Argyriou:2008}
Argyriou, A., Evgeniou, T., and Pontil, M.
\newblock Convex multi-task feature learning.
\newblock \emph{Machine Learning}, 73\penalty0 (3):\penalty0 243--272, Dec
  2008.

\bibitem[Ba \& Frey(2013)Ba and Frey]{Ba:2013}
Ba, J. and Frey, B.
\newblock Adaptive dropout for training deep neural networks.
\newblock In \emph{NIPS}, pp.\  3084--3092. 2013.

\bibitem[Bachman et~al.(2014)Bachman, Alsharif, and Precup]{Bachman:2014}
Bachman, P., Alsharif, O., and Precup, D.
\newblock Learning with pseudo-ensembles.
\newblock In \emph{NIPS}, pp.\  3365--3373. 2014.

\bibitem[Bergstra et~al.(2011)Bergstra, Bardenet, Bengio, and
  K\'{e}gl]{Bergstra:2011}
Bergstra, J.~S., Bardenet, R., Bengio, Y., and K\'{e}gl, B.
\newblock Algorithms for hyper-parameter optimization.
\newblock In \emph{Advances in Neural Information Processing Systems 24}, pp.\
  2546--2554. 2011.

\bibitem[Bilen \& Vedaldi(2016)Bilen and Vedaldi]{Bilen:2016}
Bilen, H. and Vedaldi, A.
\newblock Integrated perception with recurrent multi-task neural networks.
\newblock In \emph{NIPS}, pp.\  235--243. 2016.

\bibitem[Caruana(1998)]{Caruana:1998}
Caruana, R.
\newblock Multitask learning.
\newblock In \emph{Learning to learn}, pp.\  95--133. Springer US, 1998.

\bibitem[Chen et~al.(2016)Chen, Goodfellow, and Shlens]{Chen:2016}
Chen, T., Goodfellow, I., and Shlens, J.
\newblock Net2net: Accelerating learning via knowledge transfer.
\newblock In \emph{Proc. of ICLR}, 2016.

\bibitem[Chollet et~al.(2015)]{Chollet:2015}
Chollet, F. et~al.
\newblock Keras, 2015.

\bibitem[Collobert \& Weston(2008)Collobert and Weston]{Collobert:2008}
Collobert, R. and Weston, J.
\newblock A unified architecture for natural language processing: Deep neural
  networks with multitask learning.
\newblock In \emph{Proc. of ICML}, pp.\  160--167, 2008.

\bibitem[Devin et~al.(2016)Devin, Gupta, Darrell, Abbeel, and
  Levine]{Devin:2016}
Devin, C., Gupta, A., Darrell, T., Abbeel, P., and Levine, S.
\newblock Learning modular neural network policies for multi-task and
  multi-robot transfer.
\newblock \emph{CoRR}, abs/1609.07088, 2016.

\bibitem[Dietterich(2000)]{Dietterich:2000}
Dietterich, T.~G.
\newblock Ensemble methods in machine learning.
\newblock \emph{International workshop on multiple classifier systems}, pp.\
  1--15, 2000.

\bibitem[Dong et~al.(2015)Dong, Wu, He, Yu, and Wang]{Dong:2015}
Dong, D., Wu, H., He, W., Yu, D., and Wang, H.
\newblock Multi-task learning for multiple language translation.
\newblock In \emph{Proc. of ACL}, pp.\  1723--1732, 2015.

\bibitem[Evgeniou \& Pontil(2004)Evgeniou and Pontil]{Evgeniou:2004}
Evgeniou, T. and Pontil, M.
\newblock Regularized multi--task learning.
\newblock In \emph{Proc. of KDD}, pp.\  109--117, 2004.

\bibitem[Fernando et~al.(2017)Fernando, Banarse, Blundell, Zwols, Ha, Rusu,
  Pritzel, and Wierstra]{Fernando:2017}
Fernando, C., Banarse, D., Blundell, C., Zwols, Y., Ha, D., Rusu, A.~A.,
  Pritzel, A., and Wierstra, D.
\newblock Pathnet: Evolution channels gradient descent in super neural
  networks.
\newblock \emph{CoRR}, abs/1701.08734, 2017.

\bibitem[G{\"{u}}nther et~al.(2017)G{\"{u}}nther, Rozsa, and
  Boult]{Gunther:2017}
G{\"{u}}nther, M., Rozsa, A., and Boult, T.~E.
\newblock {AFFACT} - alignment free facial attribute classification technique.
\newblock \emph{CoRR}, abs/1611.06158v2, 2017.

\bibitem[Hand \& Chellappa(2017)Hand and Chellappa]{Hand:2017}
Hand, E.~M. and Chellappa, R.
\newblock Attributes for improved attributes: A multi-task network utilizing
  implicit and explicit relationships for facial attribute classification.
\newblock In \emph{Proc. of AAAI}, pp.\  4068--4074, 2017.

\bibitem[Hashimoto et~al.(2017)Hashimoto, Xiong, Tsuruoka, and
  Socher]{Hashimoto:2017}
Hashimoto, K., Xiong, C., Tsuruoka, Y., and Socher, R.
\newblock A joint many-task model: Growing a neural network for multiple {NLP}
  tasks.
\newblock In \emph{Proc. of EMNLP}, pp.\  1923--1933, 2017.

\bibitem[He et~al.(2017)He, Wang, Fu, Feng, Jiang, and Xue]{He:2017}
He, K., Wang, Z., Fu, Y., Feng, R., Jiang, Y.-G., and Xue, X.
\newblock Adaptively weighted multi-task deep network for person attribute
  classification.
\newblock 2017.

\bibitem[{Hinton} et~al.(2015){Hinton}, {Vinyals}, and {Dean}]{Hinton:2015}
{Hinton}, G., {Vinyals}, O., and {Dean}, J.
\newblock {Distilling the Knowledge in a Neural Network}.
\newblock \emph{ArXiv e-prints}, 2015.

\bibitem[Huang et~al.(2013)Huang, Li, Yu, Deng, and Gong]{Huang:2013}
Huang, J.~T., Li, J., Yu, D., Deng, L., and Gong, Y.
\newblock Cross-language knowledge transfer using multilingual deep neural
  network with shared hidden layers.
\newblock In \emph{Proc. of ICASSP}, pp.\  7304--7308, 2013.

\bibitem[Huang et~al.(2015)Huang, Li, Siniscalchi, Chen, Wu, and
  Lee]{Huang:2015}
Huang, Z., Li, J., Siniscalchi, S.~M., Chen, I.-F., Wu, J., and Lee, C.-H.
\newblock Rapid adaptation for deep neural networks through multi-task
  learning.
\newblock In \emph{Proc. of Interspeech}, 2015.

\bibitem[Jaderberg et~al.(2017{\natexlab{a}})Jaderberg, Dalibard, Osindero,
  Czarnecki, Donahue, Razavi, Vinyals, Green, Dunning, Simonyan, Fernando, and
  Kavukcuoglu]{Jaderberg:2017}
Jaderberg, M., Dalibard, V., Osindero, S., Czarnecki, W.~M., Donahue, J.,
  Razavi, A., Vinyals, O., Green, T., Dunning, I., Simonyan, K., Fernando, C.,
  and Kavukcuoglu, K.
\newblock Population based training of neural networks.
\newblock \emph{arXiv preprint arXiv:1711.09846}, 2017{\natexlab{a}}.

\bibitem[Jaderberg et~al.(2017{\natexlab{b}})Jaderberg, Mnih, Czarnecki,
  Schaul, Leibo, Silver, and Kavukcuoglu]{Jaderberg:2016}
Jaderberg, M., Mnih, V., Czarnecki, W.~M., Schaul, T., Leibo, J.~Z., Silver,
  D., and Kavukcuoglu, K.
\newblock Reinforcement learning with unsupervised auxiliary tasks.
\newblock In \emph{Proc. of ICLR}, 2017{\natexlab{b}}.

\bibitem[Jou \& Chang(2016)Jou and Chang]{Jou:2016}
Jou, B. and Chang, S.-F.
\newblock Deep cross residual learning for multitask visual recognition.
\newblock In \emph{Proc. of MM}, pp.\  998--1007, 2016.

\bibitem[Kaiser et~al.(2017)Kaiser, Gomez, Shazeer, Vaswani, Parmar, Jones, and
  Uszkoreit]{Kaiser:2017}
Kaiser, L., Gomez, A.~N., Shazeer, N., Vaswani, A., Parmar, N., Jones, L., and
  Uszkoreit, J.
\newblock One model to learn them all.
\newblock \emph{CoRR}, abs/1706.05137, 2017.

\bibitem[Kang et~al.(2011)Kang, Grauman, and Sha]{Kang:2011}
Kang, Z., Grauman, K., and Sha, F.
\newblock Learning with whom to share in multi-task feature learning.
\newblock In \emph{Proc. of ICML}, pp.\  521--528, 2011.

\bibitem[Kingma \& Ba(2014)Kingma and Ba]{Kingma:14}
Kingma, D.~P. and Ba, J.
\newblock Adam: {A} method for stochastic optimization.
\newblock \emph{CoRR}, abs/1412.6980, 2014.

\bibitem[Kumar \& Daum{\'e}(2012)Kumar and Daum{\'e}]{Kumar:2012}
Kumar, A. and Daum{\'e}, III, H.
\newblock Learning task grouping and overlap in multi-task learning.
\newblock In \emph{Proc. of ICML}, pp.\  1723--1730, 2012.

\bibitem[Lake et~al.(2015)Lake, Salakhutdinov, and Tenenbaum]{Lake:2015}
Lake, B.~M., Salakhutdinov, R., and Tenenbaum, J.~B.
\newblock Human-level concept learning through probabilistic program induction.
\newblock \emph{Science}, 350\penalty0 (6266):\penalty0 1332--1338, 2015.

\bibitem[Lee et~al.(2015)Lee, Xie, Gallagher, Zhang, and Tu]{Lee:2015}
Lee, C.-Y., Xie, S., Gallagher, P., Zhang, Z., and Tu, Z.
\newblock {Deeply-Supervised Nets}.
\newblock In \emph{Proc. of AISTATS}, pp.\  562--570, 2015.

\bibitem[Li et~al.(2016)Li, Gong, and Yang]{Li:2016}
Li, Z., Gong, B., and Yang, T.
\newblock Improved dropout for shallow and deep learning.
\newblock In \emph{NIPS}, pp.\  2523--2531. 2016.

\bibitem[Liu et~al.(2015{\natexlab{a}})Liu, Gao, He, Deng, Duh, and
  Wang]{Liu:2015}
Liu, X., Gao, J., He, X., Deng, L., Duh, K., and Wang, Y.~Y.
\newblock Representation learning using multi-task deep neural networks for
  semantic classification and information retrieval.
\newblock In \emph{Proc. of NAACL}, pp.\  912--921, 2015{\natexlab{a}}.

\bibitem[Liu et~al.(2015{\natexlab{b}})Liu, Luo, Wang, and Tang]{Liu:2015b}
Liu, Z., Luo, P., Wang, X., and Tang, X.
\newblock Deep learning face attributes in the wild.
\newblock In \emph{Proc. of ICCV}, 2015{\natexlab{b}}.

\bibitem[Long et~al.(2017)Long, Cao, Wang, and Yu]{Long:2017}
Long, M., Cao, Z., Wang, J., and Yu, P.~S.
\newblock Learning multiple tasks with multilinear relationship networks.
\newblock In \emph{NIPS}, pp.\  1593--1602. 2017.

\bibitem[Lu et~al.(2017)Lu, Kumar, Zhai, Cheng, Javidi, and Feris]{Lu:2016}
Lu, Y., Kumar, A., Zhai, S., Cheng, Y., Javidi, T., and Feris, R.~S.
\newblock Fully-adaptive feature sharing in multi-task networks with
  applications in person attribute classification.
\newblock \emph{Proc. of CVPR}, 2017.

\bibitem[Luong et~al.(2016)Luong, Le, Sutskever, Vinyals, and
  Kaiser]{Luong:2016}
Luong, M.~T., Le, Q.~V., Sutskever, I., Vinyals, O., and Kaiser, L.
\newblock Multi-task sequence to sequence learning.
\newblock In \emph{Proc. ICLR}, 2016.

\bibitem[Maas et~al.(2011)Maas, Daly, Pham, Huang, Ng, and Potts]{Maas:2011}
Maas, A.~L., Daly, R.~E., Pham, P.~T., Huang, D., Ng, A.~Y., and Potts, C.
\newblock Learning word vectors for sentiment analysis.
\newblock In \emph{Proc. of ACL: HLT}, pp.\  142--150, 2011.

\bibitem[Maclaurin et~al.(2015)Maclaurin, Duvenaud, and Adams]{Maclaurin:2015}
Maclaurin, D., Duvenaud, D., and Adams, R.
\newblock Gradient-based hyperparameter optimization through reversible
  learning.
\newblock In \emph{Proc. of ICML}, pp.\  2113--2122, 2015.

\bibitem[Mahmud \& Ray(2008)Mahmud and Ray]{Mahmud:2008}
Mahmud, M.~M. and Ray, S.
\newblock Transfer learning using {K}olmogorov complexity: Basic theory and
  empirical evaluations.
\newblock In \emph{NIPS}, pp.\  985--992. 2008.

\bibitem[Mahmud(2009)]{Mahmud:2009}
Mahmud, M.~M.~H.
\newblock On universal transfer learning.
\newblock \emph{Theoretical Computer Science}, 410\penalty0 (19):\penalty0 1826
  -- 1846, 2009.

\bibitem[McCann et~al.(2017)McCann, Bradbury, Xiong, and Socher]{McCann:2017}
McCann, B., Bradbury, J., Xiong, C., and Socher, R.
\newblock Learned in translation: Contextualized word vectors.
\newblock In \emph{NIPS}, pp.\  6297--6308. 2017.

\bibitem[Meyerson \& Miikkulainen(2018)Meyerson and
  Miikkulainen]{Meyerson:2018}
Meyerson, E. and Miikkulainen, R.
\newblock Beyond shared hierarchies: Deep multitask learning through soft layer
  ordering.
\newblock In \emph{Proc. of ICLR}, 2018.

\bibitem[Miikkulainen et~al.(2017)Miikkulainen, Liang, Meyerson, Rawal, Fink,
  Francon, Raju, Shahrzad, Navruzyan, Duffy, and Hodjat]{Miikkulainen:2017}
Miikkulainen, R., Liang, J., Meyerson, E., Rawal, A., Fink, D., Francon, O.,
  Raju, B., Shahrzad, H., Navruzyan, A., Duffy, N., and Hodjat, B.
\newblock Evolving deep neural networks.
\newblock \emph{arXiv preprint arXiv:1703.00548}, 2017.

\bibitem[Misra et~al.(2016)Misra, Shrivastava, Gupta, and Hebert]{Misra:2016}
Misra, I., Shrivastava, A., Gupta, A., and Hebert, M.
\newblock Cross-stitch networks for multi-task learning.
\newblock In \emph{Proc. of CVPR}, 2016.

\bibitem[Ranjan et~al.(2016)Ranjan, Patel, and Chellappa]{Ranjan:2016}
Ranjan, R., Patel, V.~M., and Chellappa, R.
\newblock Hyperface: {A} deep multi-task learning framework for face detection,
  landmark localization, pose estimation, and gender recognition.
\newblock \emph{CoRR}, abs/1603.01249, 2016.

\bibitem[Real et~al.(2017)Real, Moore, Selle, Saxena, Suematsu, Tan, Le, and
  Kurakin]{Real:2017}
Real, E., Moore, S., Selle, A., Saxena, S., Suematsu, Y.~L., Tan, J., Le,
  Q.~V., and Kurakin, A.
\newblock Large-scale evolution of image classifiers.
\newblock In \emph{Proc. of ICML}, pp.\  2902--2911, 2017.

\bibitem[Rebuffi et~al.(2017)Rebuffi, Bilen, and Vedaldi]{Rebuffi:2017}
Rebuffi, S.-A., Bilen, H., and Vedaldi, A.
\newblock Learning multiple visual domains with residual adapters.
\newblock In \emph{NIPS}, pp.\  506--516. 2017.

\bibitem[Rudd et~al.(2016)Rudd, G{\"{u}}nther, and Boult]{Rudd:2016}
Rudd, E.~M., G{\"{u}}nther, M., and Boult, T.~E.
\newblock {MOON:} {A} mixed objective optimization network for the recognition
  of facial attributes.
\newblock In \emph{Proc. of ECCV}, pp.\  19--35, 2016.

\bibitem[Ruder(2017)]{Ruder:2017}
Ruder, S.
\newblock An overview of multi-task learning in deep neural networks.
\newblock \emph{CoRR}, abs/1706.05098, 2017.

\bibitem[Russakovsky et~al.(2015)Russakovsky, Deng, Su, Krause, Satheesh, Ma,
  Huang, Karpathy, Khosla, Bernstein, Berg, and Fei-Fei]{Russakovsky:2015}
Russakovsky, O., Deng, J., Su, H., Krause, J., Satheesh, S., Ma, S., Huang, Z.,
  Karpathy, A., Khosla, A., Bernstein, M., Berg, A.~C., and Fei-Fei, L.
\newblock {ImageNet Large Scale Visual Recognition Challenge}.
\newblock \emph{International Journal of Computer Vision (IJCV)}, 115\penalty0
  (3):\penalty0 211--252, 2015.

\bibitem[Saxe et~al.(2014)Saxe, McClelland, and Ganguli]{Saxe:2013}
Saxe, A.~M., McClelland, J.~L., and Ganguli, S.
\newblock Exact solutions to the nonlinear dynamics of learning in deep linear
  neural networks.
\newblock In \emph{Proc. of ICLR}, 2014.

\bibitem[Seltzer \& Droppo(2013)Seltzer and Droppo]{Seltzer:2013}
Seltzer, M.~L. and Droppo, J.
\newblock Multi-task learning in deep neural networks for improved phoneme
  recognition.
\newblock In \emph{Proc. of ICASSP}, pp.\  6965--6969, 2013.

\bibitem[Snoek et~al.(2012)Snoek, Larochelle, and Adams]{Snoek:2012}
Snoek, J., Larochelle, H., and Adams, R.~P.
\newblock Practical bayesian optimization of machine learning algorithms.
\newblock In \emph{NIPS}, pp.\  2951--2959. 2012.

\bibitem[Srivastava et~al.(2014)Srivastava, Hinton, Krizhevsky, Sutskever, and
  Salakhutdinov]{Srivastava:2014}
Srivastava, N., Hinton, G., Krizhevsky, A., Sutskever, I., and Salakhutdinov,
  R.
\newblock {Dropout: A Simple Way to Prevent Neural Networks from Overfitting}.
\newblock \emph{JMLR}, 15\penalty0 (1):\penalty0 1929--1958, 2014.

\bibitem[Szegedy et~al.(2016)Szegedy, Ioffe, and Vanhoucke]{Szegedy:2016}
Szegedy, C., Ioffe, S., and Vanhoucke, V.
\newblock Inception-v4, inception-resnet and the impact of residual connections
  on learning.
\newblock \emph{CoRR}, abs/1602.07261, 2016.

\bibitem[Teh et~al.(2017)Teh, Bapst, Czarnecki, Quan, Kirkpatrick, Hadsell,
  Heess, and Pascanu]{Teh:2017}
Teh, Y., Bapst, V., Czarnecki, W.~M., Quan, J., Kirkpatrick, J., Hadsell, R.,
  Heess, N., and Pascanu, R.
\newblock Distral: Robust multitask reinforcement learning.
\newblock In \emph{NIPS}, pp.\  4499--4509. 2017.

\bibitem[{Toshniwal} et~al.(2017){Toshniwal}, {Tang}, {Lu}, and
  {Livescu}]{Toshniwal:2017}
{Toshniwal}, S., {Tang}, H., {Lu}, L., and {Livescu}, K.
\newblock {Multitask Learning with Low-Level Auxiliary Tasks for
  Encoder-Decoder Based Speech Recognition}.
\newblock \emph{CoRR}, abs/1704.01631, 2017.

\bibitem[van~der Maaten \& Hinton(2008)van~der Maaten and Hinton]{Maaten:2008}
van~der Maaten, L. and Hinton, G.
\newblock \emph{JMLR}, 9:\penalty0 2579--2605, Nov 2008.

\bibitem[Wei et~al.(2016)Wei, Wang, Rui, and Chen]{Wei:2016}
Wei, T., Wang, C., Rui, Y., and Chen, C.~W.
\newblock Network morphism.
\newblock In \emph{Proc. of ICML}, pp.\  564--572, 2016.

\bibitem[Wu et~al.(2015)Wu, Valentini-Botinhao, Watts, and King]{Wu:2015}
Wu, Z., Valentini-Botinhao, C., Watts, O., and King, S.
\newblock Deep neural networks employing multi-task learning and stacked
  bottleneck features for speech synthesis.
\newblock In \emph{Proc. of ICASSP}, pp.\  4460--4464, 2015.

\bibitem[Yang \& Hospedales(2017)Yang and Hospedales]{Yang:2017}
Yang, Y. and Hospedales, T.
\newblock Deep multi-task representation learning: A tensor factorisation
  approach.
\newblock In \emph{Proc. of ICLR}, 2017.

\bibitem[Zhang \& Weiss(2016)Zhang and Weiss]{Zhang:2016}
Zhang, Y. and Weiss, D.
\newblock Stack-propagation: Improved representation learning for syntax.
\newblock pp.\  1557--1566, 2016.

\bibitem[Zhang et~al.(2014)Zhang, Ping, Chen, and Xiaoou]{Zhang:2014}
Zhang, Z., Ping, L., Chen, L.~C., and Xiaoou, T.
\newblock Facial landmark detection by deep multi-task learning.
\newblock In \emph{Proc. of ECCV}, pp.\  94--108, 2014.

\bibitem[Zoph \& Le(2017)Zoph and Le]{Zoph:2017}
Zoph, B. and Le, Q.~V.
\newblock Neural architecture search with reinforcement learning.
\newblock In \emph{Proc. of ICLR}, 2017.

\end{thebibliography}
\bibliographystyle{icml2018_style/icml2018}
}

\end{document}